\documentclass[letterpaper]{article} 
\usepackage{text21}  
\usepackage{times}  
\usepackage{helvet} 
\usepackage{courier}  
\usepackage[hyphens]{url}  
\usepackage{graphicx} 
\urlstyle{rm} 
\usepackage{natbib}  
\usepackage{caption} 
\frenchspacing  
\setlength{\pdfpagewidth}{8.5in}  
\setlength{\pdfpageheight}{11in}  

\usepackage{graphicx}
\usepackage{amsmath,amssymb}
\usepackage{mathtools}
\usepackage{dsfont}
\usepackage{xspace}
\usepackage{epsfig}
\usepackage{color}
\usepackage{url}
\usepackage{tikz}
\usepackage{float}
\usepackage{booktabs}
\usepackage{makecell}
\usepackage{multicol}
\usepackage{multirow}
\usepackage{longtable}
\usepackage{array}
\usepackage{anyfontsize}
\usepackage{balance}
\usepackage{algpseudocode}
\newcommand{\ignore}[1]{}
\usepackage[vlined,linesnumbered,ruled,titlenotnumbered]
{algorithm2e}
\makeatletter
\providecommand{\@LN}[2]{}
\makeatother

\setcounter{secnumdepth}{0}

\title{Computing Diverse Sets of Solutions for Monotone Submodular Optimisation Problems}

 \author {
     Aneta Neumann, \textsuperscript{\rm 1}
       Jakob Bossek, \textsuperscript{\rm 2}
     Frank Neumann \textsuperscript{\rm 1} \\
 }
\affiliations {
     \textsuperscript{\rm 1} School of Computer Science, The University of Adelaide, Australia \\
     \textsuperscript{\rm 2} Statistics and Optimization, The University of M{\"u}nster, Germany \\
 }

\usepackage{amsmath,amssymb,amsthm}

\newtheorem{theorem}{Theorem}
\usepackage{graphicx}
\usepackage{mathtools}
\usepackage{xcolor}
\newcommand{\R}{\mathbb{R}}
\newcommand{\argmax}{\text{arg$\,$max}}

\begin{document}

\maketitle
\begin{abstract}
Submodular functions allow to model many real-world optimisation problems. 
This paper introduces approaches for computing diverse sets of high quality solutions for submodular optimisation problems. We first present diversifying greedy sampling approaches and analyse them with respect to the diversity measured by entropy and the approximation quality of the obtained solutions. Afterwards, we introduce an evolutionary diversity optimisation approach to further improve diversity of the set of solutions. We carry out experimental investigations on popular submodular benchmark functions that show that the combined approaches achieve high quality solutions of large diversity.
\end{abstract}

\section{Introduction}

Submodular functions play a key role in the area of optimisation as they allow to model many real-world optimisation problem~\cite{DBLP:books/cu/p/0001G14}. The classical goal in the area of optimisation is to compute one single solution that maximises a given objective function under a given set of constraints. However, in many application areas it is useful to compute a set of diverse and high quality solutions that can be used for further discussion on which solution to implement. Having a set of high quality solutions available also allows to see the different alternatives and gives a broader perspective on options for implementing a solution. 

\subsection{Related work}
The classical problem of maximizing a monotone submodular function under a given uniform constraint can be solved by a simple greedy algorithm given in~\cite{DBLP:journals/mp/NemhauserWF78}.
Extensions have been made in terms of knapsack and more general cost constraints~\cite{DBLP:conf/aaai/ZhangV16} as well as results for function being close to submodular have been obtained for generalized greedy approaches that pick in each step an element with a largest marginal gain. 
In recent years, it has been shown for a variety of the beforehand mentioned problems that evolutionary multi-objective algorithms using a Pareto optimisation approach~\cite{DBLP:journals/ec/FriedrichN15,DBLP:conf/nips/QianYZ15} can achieve the same performance guarantees as greedy approaches, but perform much better in practice in a wide range of settings~\cite{DBLP:conf/ijcai/QianSYT17,DBLP:conf/ppsn/NeumannN20}. An overview on Pareto optimisation approaches for submodular optimisation can be found in the recent book by~\citet{DBLP:books/sp/ZhouYQ19}.

In many real-world applications it is desirable to have a set of high quality solutions available that differ with respect to their design. This allows decision makers to see different options not quantified as part of the formulated objectives and see a wide range of option of achieving a desired goal. Recent investigations into this direction have gained some attention in the artificial intelligence community with respect to planning problems~\cite{DBLP:conf/aaai/0001S20,DBLP:conf/aaai/KatzSU20}. In the case of top-$k$ planning, the goal is to obtain a set of solutions such that no better solution outside that set exists~\cite{DBLP:conf/aips/0001SUW18}. Solutions for this problem are obtained by iteratively forbidding solutions or solution components. Furthermore, the design of diverse sets of policies has been investigated~\cite{DBLP:conf/ijcai/MasoodD19} and gradient-based methods have developed to diversify existing approaches for the creation of policies.

Computing diverse sets of high quality solutions for a given problem has recently gained increasing interest in the area of evolutionary computation under the notions evolutionary diversity optimisation~\cite{DBLP:conf/gecco/UlrichT11,DBLP:conf/ppsn/GaoNN16,DBLP:conf/gecco/NeumannGDN018,DBLP:conf/gecco/NeumannG0019} and quality diversity~\cite{DBLP:journals/firai/PughSS16,DBLP:conf/ppsn/PughSS16}. In terms of combinatorial optimisation problems, the computation of high quality diverse sets of solutions for the Traveling Salesperson Problem has been considered~\cite{DBLP:conf/gecco/DoBN020}.

\subsection{Our contribution}
We examine approaches for creating diverse sets of high quality solutions for submodular optimisation problems. The class of objective functions that we study are monotone and $\alpha_f \in [0,1]$-submodular where $\alpha_f$ measures how close a function $f$ is to being submodular.
We are interested in sets of solutions where each solution fulfills a given approximation criterion and aim to maximise the diversity of the set of solutions obtained. 

We present approaches for computing diverse sets of solutions for monotone functions under given constraints that all are of high quality. 
We first present diversifying greedy sampling approaches for monotone functions with a uniform and a knapsack constraint. Our approaches make use of established greedy approaches, but allow to create diverse solutions while still almost maintaining the approximation quality achieved by the greedy approaches. We show that solutions constructed by our diversifying greedy sampling approaches only loose a small amount in terms of approximation ratio compared to standard greedy approaches but are able to construct many different solutions.

Afterwards, we introduce an evolutionary algorithm approach to maximise diversity under the condition that all solutions fulfill the given quality criterion. We consider the classical entropy measure to measure the diversity of a given set of solutions.
Our combined approach for creating high quality diverse sets of solutions, first runs one of the developed diversifying greedy sampling approaches and uses evolutionary diversity optimisation to increase the diversity of the obtained set while maintaining the guaranteed approximation quality of the diversifying greedy sampling approaches

We show in our experimental investigations that the evolutionary diversity optimisation approach allows to significantly improve the diversity of the set of solutions obtained by diversifying greedy sampling for the classical submodular coverage problem. Furthermore, our experimental results reveal the trade-offs with respect to the diversity of the obtained sets and the quality guaranteed by the diversifying greedy sampling approaches.

The paper is structured as follows. In the next section, we introduce the problem of computing sets of high quality solutions. Afterwards we present our diversifying greedy sampling approaches when working with uniform constraints and knapsack constraints. We examine the short-comings of these approaches and introduce and evolutionary diversity optimisation approach to increase the diversity of our solutions sets and report on experimental results afterwards. Finally, we finish with some concluding remarks.

\section{Problem Formulation}

Given a set $V=\{v_1, \ldots, v_n\}$, an objective function $f \colon 2^V \rightarrow \mathds{R}_+$, and a constraint $c(X) \leq B$, the classical goal in optimisation is to find a solution $OPT = \arg \max_{X \subseteq V} \{f(X) \mid 
c(X) \leq B\}$.
Assume that a have two sets $A$ and $B$ such that $A \subseteq B \subseteq V$. The function $f$ is monotone iff we have $f(A) \leq f(B)$. A function $f$ is submodular if for  $v \notin B$, $f(A \cup \{v\})-f(A) \geq f(B \cup \{v\}) - f(B)$ holds.

The submodularity ratio $\alpha_f$~\cite{DBLP:conf/aaai/ZhangV16,DBLP:conf/ijcai/QianSYT17} of a given function $f$ is defined as

$$\alpha_f = \min_{A\subseteq B,v\not \in B}\frac{f(A\cup v) - f(A)}{f(B\cup v)- f(B)}\text{.}$$

We call a function $f$, $\alpha_f$-submodular in this case.
Note, that $f$ is submodular iff $\alpha_f=1$ holds. Throughout this paper, we assume that $f$ is is monotone and analyze approximation guarantees in terms of $\alpha_f$.

Given an objective function $f$ and a constraint specified by a cost function $c$ and constraint bound $B$, we study the problem of computing a (multi-) set of solutions $P=\{P_1, \ldots, P_{\mu}\}$ such that $f(P_i) \geq f_{\min} \wedge c(P_i) \leq B$ holds, $1 \leq i \leq \mu$. 
Here $f_{\min}$ is a threshold value which requires a minimum objective value for any solution in $P$.

In the following, we introduce greedy sampling approaches that are able to construct sets of solutions where all solutions have guaranteed quality. We also point out the trade-off in terms of guaranteed quality of the solutions and the number of solutions that are meeting the guaranteed quality and present and evolutionary diversity optimisation approach that can significantly increase the entropy of the solutions sets.

\section{Diverse Sets for Uniform Constraints}

In the following, we show how to adapt popular greedy algorithms in the area of submodular optimisation such that they are able to construct diverse sets of high quality solutions.
We first consider the case of a uniform constraint, i.e. we have $c(X) = |X|\leq B$. 

\subsection{Diversifying Greedy Sampling}
The diversifying greedy sampling (DGS) approach outlined in Algorithm~\ref{alg:DGS} starts with an empty set and always includes the item with the largest marginal gain until $B-m$ elements have been inserted. The margin $m$ here determines the quality of the partial solution $S$ obtained in this way. Throughout this paper, we assume that $m$ is small compared to $B$, i.e. $m = o(B)$ holds and it should be noted that small values of $m$ are sufficient to construct large sets of diverse solutions if $n-B$ is sufficiently large. 
After having obtained $S$ a set of $\mu$ solutions $\{P_1, \ldots, P_{\mu}\}$ is constructed extending $S$ by adding $m$ randomly chosen elements that are not in $S$ in order to obtain $P_i$, $1 \leq i \leq \mu$.

We now analyze DGS with respect to the quality and the diversity of the solution set it obtains.
We investigate how to create sets of distinct high quality approximations for monotone submodular functions with linear constraints.
We start by investigating the case of uniform constraint.

\begin{theorem}
\label{thm:dgs}
For a given monotone $\alpha_f$-submodular function $f$ with a uniform constraint $|X|\leq B$ and margin $m\leq B$, DGS computes a population $P=\{P_1, \ldots, P_{\mu}\}$, where 
$f(P_i) \geq (1- e^{-\alpha_f}) \cdot (1- \alpha_f /B)^{-m})\cdot f(OPT)$,
$1 \leq i \leq \mu$.
If $m=o(B)$ and $\alpha_f$ is a constant, then for such solution $P_i$
$$f(P_i) \geq (1-o(1)) \cdot (1-e^{-\alpha_f}) \cdot f(OPT)$$
holds.
\end{theorem}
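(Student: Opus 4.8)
The plan is to exploit the two-phase structure of DGS: a deterministic greedy prefix of length $B-m$ producing the partial solution $S$, followed by the random completion of $S$ into each $P_i$ by adding $m$ elements not in $S$. Since every $P_i$ satisfies $S \subseteq P_i$ and $f$ is monotone, we immediately get $f(P_i) \geq f(S)$ for all $i$. Hence it suffices to prove the claimed lower bound for $f(S)$ alone; the random extension can only help, and the same guarantee then holds simultaneously for the whole population regardless of which $m$ elements are added. This observation is what decouples quality from diversity in the argument.

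The core is therefore a greedy-progress analysis for $\alpha_f$-submodular functions. Writing $S_0 \subseteq S_1 \subseteq \cdots \subseteq S_{B-m} = S$ for the greedy iterates and $\delta_k$ for the marginal gain of the element chosen in step $k+1$, I would first establish the per-step inequality $\delta_k \geq (\alpha_f/B)(f(OPT)-f(S_k))$. To see this, order $OPT = \{o_1,\dots,o_r\}$ with $r \leq B$, telescope $f(S_k \cup OPT) - f(S_k)$ along the chain $S_k \cup \{o_1,\dots,o_j\}$, and bound each increment $f(S_k \cup \{o_1,\dots,o_j\} \cup o_{j+1}) - f(S_k \cup \{o_1,\dots,o_j\})$ from above by $\alpha_f^{-1}(f(S_k \cup o_{j+1}) - f(S_k))$ via the definition of the submodularity ratio applied with $A = S_k$. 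Monotonicity gives $f(OPT) \leq f(S_k \cup OPT)$, so summing yields $f(OPT) - f(S_k) \leq (B/\alpha_f)\,\delta_k$, which rearranges to the per-step bound since greedy selects the element of largest marginal gain.

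From the per-step bound, setting $g_k = f(OPT)-f(S_k)$ gives the recurrence $g_{k+1} \leq (1-\alpha_f/B)\, g_k$, so $g_{B-m} \leq (1-\alpha_f/B)^{B-m} f(OPT)$ (using $f(\emptyset) \geq 0$), i.e. $f(S) \geq (1 - (1-\alpha_f/B)^{B-m}) f(OPT)$. The remaining work is algebraic: factor $(1-\alpha_f/B)^{B-m} = (1-\alpha_f/B)^{B}\,(1-\alpha_f/B)^{-m}$ and apply $(1-\alpha_f/B)^{B} \leq e^{-\alpha_f}$ to obtain the stated bound $f(P_i) \geq (1 - e^{-\alpha_f}(1-\alpha_f/B)^{-m})\, f(OPT)$. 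For the asymptotic statement, expand $(1-\alpha_f/B)^{-m} = \exp(-m\ln(1-\alpha_f/B)) = \exp(m\alpha_f/B + O(m/B^2))$; since $m = o(B)$ and $\alpha_f$ is constant the exponent is $o(1)$, so $(1-\alpha_f/B)^{-m} = 1 + o(1)$, the factor $e^{-\alpha_f}(1-\alpha_f/B)^{-m}$ equals $e^{-\alpha_f} + o(1)$, and the bound collapses to $f(P_i) \geq (1-o(1))(1-e^{-\alpha_f})\, f(OPT)$.

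I expect the main obstacle to be the per-step marginal-gain inequality, since it is where the submodularity ratio enters and one must be careful that each application of the ratio uses $A = S_k$ (rather than the running set along the chain) so that the $\alpha_f^{-1}$ factors accumulate correctly; the bound $r = |OPT| \leq B$ is also essential at this point. Everything after the recurrence is routine estimation, the only subtlety being to keep the $(1-\alpha_f/B)^{-m}$ factor explicit before passing to asymptotics.
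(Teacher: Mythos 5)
Your proposal is correct and takes essentially the same route as the paper's proof: the same per-step inequality $f(OPT)-f(S_k) \leq (B/\alpha_f)\,\delta_k$, the same induction/recurrence giving $f(S) \geq \left(1-(1-\alpha_f/B)^{B-m}\right)f(OPT)$, the same factoring via $(1-\alpha_f/B)^{B} \leq e^{-\alpha_f}$, and the same final monotonicity step $f(P_i) \geq f(S)$. The only difference is presentational: you spell out the telescoping argument along the chain $S_k \cup \{o_1,\dots,o_j\}$ with $A = S_k$ in the submodularity-ratio bound, a detail the paper leaves implicit by deferring to the classical analysis of Nemhauser, Wolsey and Fisher.
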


\begin{proof}
We consider a solution produced by the greedy algorithm which introduced $B-m$ elements. For the proof, we follow the ideas for the classical greedy approach analysed in~\cite{DBLP:journals/mp/NemhauserWF78} and take into account the submodularity ratio $\alpha_f$ and the fact that only $B-m$ elements can be introduced to obtain the solution $S$.

For each set $X$ with $|X| < B-m$ and the element $v^* \not \in X$ such that $f(X \cup v^*) - f(x)$ is maximal, we have 
\begin{align*}
f(OPT)
& \leq f(OPT \cup X) \\
& \leq f(X) + (B/\alpha_f) f(X \cup v^*) - f(X).
\end{align*}
Here the first inequality follows from monotonicity and the second one holds as $f$ is $\alpha_f$-submodular.

This implies that in each step an element $v^*$ is added to the current partial solution $S$, the function value increases by at least
$$f(S \cup v^*) \geq \frac{\alpha_f}{B} \cdot (f(OPT)-f(X)).
$$

Using induction on the number of elements $j$~ as done in~\cite{DBLP:journals/mp/NemhauserWF78} and taking into account $\alpha_f$, we obtain a partial solution $S_i$ with

$$f(S_i) \geq (1- (1-\alpha_f/B)^{j}) \cdot f(OPT)$$

After having inserted $B-m$ elements greedily to obtain $S$, we have 
\begin{eqnarray*}
f(S) &\geq & (1- (1-\alpha_f/B)^{B-m}) \cdot f(OPT)\\
&\geq & (1- e^{-\alpha_f}) \cdot (1-\alpha_f/B)^{-m}) \cdot f(OPT)
\end{eqnarray*}

If $m = o(B)$ and $\alpha_f$ a constant, then we have
\begin{eqnarray*}
f(S) &\geq & (1-o(1)) (1- e^{-\alpha_f})  \cdot f(OPT).
\end{eqnarray*}

Each solution $P_i$ is obtained from $S$ by adding $m$ additional elements. As $f$ is monotone, we have $f(P_i) \geq f(S)$ which completes the proof.
\end{proof}

The margin $m$ allows to select $m$ additional elements not contained in $S$ to obtain solutions meeting the quality criteria. This leads to a large number of solutions if the difference $n-B$ is sufficiently large.

\begin{theorem}
\label{thm:dgs-size}
There are at least $\sum_{i=0}^m \binom{n-B+m}{i}$ feasible solutions $X$ with $f(X) \geq (1- e^{-\alpha_f}) \cdot (1- \alpha_f /B)^{-m})\cdot f(OPT)$.
Each $P_i$ produced by DGS is sampled from a set of
$\binom{n-B+m}{m}$ such solutions.
\end{theorem}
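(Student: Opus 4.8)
The plan is to prove both claims by a direct counting argument that exhibits an explicit family of qualifying solutions sitting on top of the greedy core $S$ produced in the first phase of DGS. Recall from the analysis underlying Theorem~\ref{thm:dgs} that the greedy phase inserts exactly $B-m$ elements, so the resulting partial solution $S$ satisfies $|S| = B-m$ and $f(S) \geq (1- e^{-\alpha_f}) \cdot (1- \alpha_f /B)^{-m} \cdot f(OPT)$. The elements available to extend $S$ are those in $V \setminus S$, and this set has cardinality $n - (B-m) = n-B+m$. These two facts, $|S|=B-m$ and $|V\setminus S| = n-B+m$, are the only structural ingredients I would need.

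First I would establish the lower bound on the number of feasible high-quality solutions. Consider any set $X$ with $S \subseteq X \subseteq V$ and $|X| \leq B$. Since $|S| = B-m$, such an $X$ is obtained by adjoining some $i \in \{0, 1, \ldots, m\}$ elements drawn from $V \setminus S$. Each such $X$ is feasible, because $|X| = B-m+i \leq B$, and by monotonicity of $f$ we have $f(X) \geq f(S)$, so $X$ meets the required quality threshold. Counting these sets according to the number $i$ of added elements yields exactly $\sum_{i=0}^m \binom{n-B+m}{i}$ distinct qualifying solutions. Since there may be further feasible solutions of the required quality that do not contain $S$, this family need not be exhaustive, which is precisely why the statement asserts a lower bound rather than an equality.

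Next I would verify the sampling claim. By construction each $P_i$ produced by DGS equals $S \cup T_i$ for some $T_i \subseteq V \setminus S$ with $|T_i| = m$; hence $P_i$ ranges over the sets formed by adding exactly $m$ of the $n-B+m$ available elements, of which there are $\binom{n-B+m}{m}$. Each of these coincides with the case $i=m$ of the family counted above, so all of them are feasible and satisfy the quality guarantee, and DGS samples from a pool of $\binom{n-B+m}{m}$ such solutions.

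I expect no serious obstacle, as the argument rests entirely on monotonicity and on the known size of $S$. The only points requiring care are the feasibility bookkeeping --- that adjoining more than $m$ elements to $S$ would violate $|X| \leq B$, which is exactly why the sum terminates at $i=m$ --- and the observation that the supersets of $S$ need not capture every qualifying solution, so the count is stated as a lower bound.
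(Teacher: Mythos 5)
Your proposal is correct and follows essentially the same argument as the paper's proof: count the supersets of the greedy core $S$ obtained by adjoining $i \in \{0,\ldots,m\}$ of the $n-B+m$ unchosen elements, each feasible and meeting the quality threshold, and observe that each $P_i$ corresponds to the case $i=m$. You are in fact slightly more careful than the paper, which leaves the appeal to monotonicity and to the bound $f(S) \geq (1- e^{-\alpha_f})(1- \alpha_f /B)^{-m} f(OPT)$ from Theorem~\ref{thm:dgs} implicit, but the underlying counting argument is identical.
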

\begin{proof}
As $B-m$ elements have only been introduced to obtain $S$, $m$ additional elements among the $n-B+m$ elements not chosen by the greedy algorithm can be introduced. Adding every possible subset consisting of $i$, $0 \leq i \leq m$, so far unchosen elements to the $B-m$ elements of $S$ gives a set of
$$\sum_{i=0}^m \binom{n-B+m}{i}$$
solutions. Each $P_i$ selects $m$ of such elements which implies that each $P_i$ is sampled from a set of 
$\binom{n-B+m}{m}$
different solutions. 
\end{proof}

Based on the previous proof, we have that the number of different solutions with exactly $B$ elements that meet the quality criterion of Theorem~\ref{thm:dgs} is at least $\binom{n-B+m}{m}$.
We will concentrate on such solutions with exactly $B$ elements as we are dealing with monotone functions where the addition of elements does not reduce the function value. In fact, adding elements without violating the constraint often leads to a solution of higher quality in practice.

\section{Diverse Sets for Knapsack Constraints}

We now consider the case where we have given a knapsack constraint
$c(S) \leq B$, where $c(S)=\sum_{s \in S} c(s)$ and $c \colon V \rightarrow \mathds{R}_+$ gives the cost of choosing an element $s$.
Let $c_{\min}= \min_{v \in V} c(v)$ and $c_{\max}= \max_{v \in V} c(v)$ be the minimal and maximal cost of the elements in $V$, respectively.

\subsection{Generalized Diversifying Greedy Sampling}
The generalized diversifying greedy sampling (GDGS) approach also works with a margin $m$ which reduces the threshold for introducing elements in the greedy steps from $B$ to $B-m$. In each step, the element with the largest gain to weight ratio without violating the constraint is chosen.

\begin{algorithm}[t]
	\SetKwInOut{Input}{input}
    \Input{
    Set of elements $V$, function $f$, budget constraint $B$, margin $m$, number of solutions $\mu$.}
    $S \leftarrow\emptyset$\;
		$V^\prime \leftarrow V$\;
		\While{$(|S|<B-m) \wedge (V' \not= \emptyset)$}{
$v^*\leftarrow \argmax_{v\in V^\prime} (f(S \cup\{v\})-f(S))\label{lineGA:KostenNutzen}$\;
    $S\leftarrow S\cup \{v^*\}$\;
    $V^\prime \leftarrow V^\prime\setminus \{v^*\}$\;
    }
    \For{$i=1, \ldots, \mu$}{
    $P_i \leftarrow S$\;
   $V^\prime \leftarrow V \setminus S$\;
   \While{$(|P_i|<B) \wedge (V' \not= \emptyset)$}{
    Choose $v^* \in V^\prime$ uniformly at random\;
    $V^\prime \leftarrow V^\prime\setminus \{v^*\}$\;
     }
     }
    \Return{$P=\{P_1, \ldots, P_{\mu}\}$}\;
    \caption{Diversifying Greedy Sampling (DGS)}\label{alg:GA}
    \end{algorithm}

    \begin{algorithm}[t]
	\SetKwInOut{Input}{input}
    \Input{
    Set of elements $V$, functions $f$ and $c$, budget constraint $B$, margin $m$, number of solutions $\mu$.}
    $S \leftarrow\emptyset$\;
		$V^\prime \leftarrow V$\;
\While{$V^\prime \not= \emptyset$}{$v^*\leftarrow \argmax_{v\in V^\prime}\frac{f(S \cup\{v\})-f(S)}{W(S\cup \{v\})-W(S)\label{line:KostenNutzen}}$\;
    \If{$W(S\cup \{v^*\})\leq B-m$} {$S\leftarrow S\cup \{v^*\}$\;
    }
   
    $V^\prime \leftarrow V^\prime\setminus \{v^*\}$\;}
    $T \leftarrow \argmax_{Y\in \{S,\{y\}\}}f(Y)$\;
     \For{$i=1, \ldots, \mu$}{
    $P_i \leftarrow T$\;
   $V^\prime \leftarrow V \setminus T$\;
    \While{$V^\prime \not= \emptyset$}{
    Choose $v^* \in V^\prime$ uniformly at random\;
    \If{$W(P_i\cup \{v^*\})\leq B$}{$P_i\leftarrow S\cup \{v^*\}$\;}
    $V^\prime \leftarrow V^\prime\setminus \{v^*\}$\;}
     }
    
    \Return{$P=\{P_1, \ldots, P_{\mu}\}$}\;
    \caption{Generalized Diversifying Greedy Sampling (GDGS)}\label{alg:DGS}
    \end{algorithm}

\begin{theorem}
Consider a monotone $\alpha_f$-submodular function $f$ with a knapsack constraint $c(X)\leq B$ and margin $m\leq B$.
If $\alpha_f \cdot m=o(B)$, then for each solution $P_i$, $1 \leq i \leq \mu$, constructed by GDGS, we have 
$$
f(P_i) \geq (\alpha_f/2)(1- e^{-\alpha_f+o(1)}) \cdot f(OPT)
$$
\end{theorem}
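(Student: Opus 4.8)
The plan is to mirror the proof of Theorem~\ref{thm:dgs}, replacing the cardinality-based greedy bound by the classical density (gain-to-weight) analysis for knapsack constraints while tracking the submodularity ratio $\alpha_f$ throughout, and then to account separately for the single-best-element comparison that the algorithm performs. First I would isolate the greedy phase that builds $S$. Writing $v_1, v_2, \ldots$ for the elements selected in order of largest gain-to-weight ratio and $S_j = \{v_1, \ldots, v_j\}$, the key step is the per-step recursion
$$f(OPT) - f(S_{j+1}) \leq \left(1 - \frac{\alpha_f\, c(v_{j+1})}{B}\right)\left(f(OPT) - f(S_{j})\right).$$
This is obtained exactly as in Theorem~\ref{thm:dgs}: monotonicity gives $f(OPT) \le f(OPT \cup S_j)$; a telescoping argument together with $\alpha_f$-submodularity bounds $f(OPT \cup S_j) - f(S_j)$ by $\frac{1}{\alpha_f}\sum_{o \in OPT \setminus S_j}\left(f(S_j \cup \{o\}) - f(S_j)\right)$; and the greedy choice of maximum ratio lets each marginal gain be replaced by $c(o)\cdot \frac{f(S_{j+1}) - f(S_j)}{c(v_{j+1})}$, after which $\sum_{o}c(o) \le B$ yields the recursion.

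Unrolling the recursion and using $1 - x \le e^{-x}$ gives $f(S_j) \ge \left(1 - e^{-\alpha_f c(S_j)/B}\right) f(OPT)$. The margin $m$ enters here. Let $v^*$ be the first element the greedy rejects because $W(S \cup \{v^*\}) > B-m$; up to that point no element has been discarded, so the recursion also applies to the virtual set $S' = S_t \cup \{v^*\}$, where $S_t$ is the partial solution at the moment of rejection and $c(S') > B-m$. This yields
$$f(S') \ge \left(1 - e^{-\alpha_f(B-m)/B}\right) f(OPT) = \left(1 - e^{-\alpha_f + \alpha_f m/B}\right) f(OPT),$$
and the hypothesis $\alpha_f m = o(B)$ collapses the exponent to $-\alpha_f + o(1)$, producing the $e^{-\alpha_f + o(1)}$ term of the statement.

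The factor $\alpha_f/2$ comes from the step $T \leftarrow \argmax_{Y \in \{S, \{y\}\}} f(Y)$. I would bound $f(S')$ from above: applying $\alpha_f$-submodularity to the nested pair $\emptyset \subseteq S_t$ gives $f(S_t \cup \{v^*\}) - f(S_t) \le \frac{1}{\alpha_f} f(\{v^*\})$, hence $f(S') \le f(S_t) + \frac{1}{\alpha_f} f(\{v^*\})$. Since $f(S_t) \le f(S) \le f(T)$ and $f(\{v^*\}) \le f(T)$ by the definition of $T$, we obtain $f(S') \le \frac{1+\alpha_f}{\alpha_f} f(T)$, so that $f(T) \ge \frac{\alpha_f}{1+\alpha_f} f(S') \ge \frac{\alpha_f}{2}\left(1 - e^{-\alpha_f + o(1)}\right) f(OPT)$, where the last inequality uses $\alpha_f \le 1$. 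Finally each $P_i$ is obtained from $T$ by inserting further feasible elements, so monotonicity gives $f(P_i) \ge f(T)$ and completes the bound.

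The main obstacle I expect is knapsack-specific bookkeeping rather than a new idea: justifying the density inequality that replaces each OPT element's marginal gain by the gain of $v_{j+1}$ even though some OPT elements may be too large to fit the greedy's remaining budget, and pinning down the ``first rejected'' element in this algorithm variant, which scans all of $V$ rather than halting at the first overflow. One must also confirm that the two losses --- the $e^{-\alpha_f+o(1)}$ from the greedy/margin phase and the $\alpha_f/2$ from the single-element comparison --- combine cleanly, and dispatch the degenerate case in which no element is ever rejected via $f(S) = f(V) \ge f(OPT)$.
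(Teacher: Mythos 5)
Your proposal is correct and follows essentially the same route as the paper's proof: the per-step gain-to-cost inequality with $\alpha_f$ (which the paper simply cites from Qian et al.), unrolling over the greedy prefix together with the virtually added first rejected element whose cost pushes past $B-m$, the split $f(S \cup \{y\}) \le f(S) + \frac{1}{\alpha_f} f(\{y\})$ to extract the $\alpha_f/2$ factor from $T \leftarrow \argmax_{Y\in\{S,\{y\}\}} f(Y)$, and monotonicity for each $P_i$. Your write-up is in places slightly tidier --- unrolling via $1-x \le e^{-x}$ on the accumulated cost avoids the paper's cost-equalization of the product, you track availability of $OPT$-elements by stopping at the first rejection, and your intermediate constant $\alpha_f/(1+\alpha_f)$ is marginally sharper before being weakened to $\alpha_f/2$ --- but these are refinements of the same argument, not a different one.
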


\begin{proof}
We run the generalized greedy algorithm for budget $B-m$. 

Following~\cite{DBLP:conf/ijcai/QianSYT17}, the gain in terms of $f$ when inserting $v^*$ is at least
$$f(S \cup v^*) - f(S) \geq \alpha_f \cdot \frac{c(S \cup v^*)-c(S)}{B} \cdot (f(OPT)-f(S)).
$$

GDGS obtains a solution $S$ with $|S|=L$ and an element $y$ such that
$c(S) + c(y) \geq B-m$. Using induction, we get

\begin{align*}
&f(S \cup y)  \geq  \left[1 -\prod_{j=1}^{L+1} \left(1- \alpha_f \frac{c(s)}{B}\right) \right] \cdot f(OPT)\\
& \geq  \left[1 -\prod_{j=1}^{L+1} \left(1- \alpha_f \frac{B-m}{(L+1)B}\right) \right] \cdot f(OPT)\\
& \geq  \left[1 - \left(1- \alpha_f \frac{1}{L+1} +\alpha_f \frac{m}{(L+1)\cdot B}\right)^{L+1} \right] \cdot f(OPT)
\end{align*}

Assuming $\alpha_f \cdot m = o(B)$, we have 
\begin{align*}
f(S \cup y) & \geq \left[1 - \left(1- \frac{\alpha_f-o(1)}{L+1}\right)^{L+1} \right]  f(OPT)\\
& =(1- e^{-\alpha_f+o(1)}) \cdot f(OPT)\\
\end{align*}

Using that $f$ is $\alpha_f$-submodular, we have 
\begin{eqnarray*}
f(S) + f(y) & \geq & f(S) + \alpha_f f(y)\\
& \geq &\alpha_f \cdot f(S \cup y)\\
& \geq & \alpha_f \cdot (1-e^{-\alpha_f+o(1)})  \cdot f(OPT).
\end{eqnarray*}

We have $f(T)= \max\{f(S), f(y)\}$ and hence
$$f(T) \geq (\alpha_f/2) (1-e^{-\alpha_f+o(1)})  \cdot f(OPT).
$$

 As $f$ is monotone and each $P_i$ is obtained from $T$ by adding an additional set of elements, we have for
$$f(P_i) \geq f(T) \geq (\alpha_f/2) (1-e^{-\alpha_f+o(1)}) f(OPT)$$
$1 \leq i \leq \mu$, which completes the proof.
\end{proof}

We now investigate the number of different solutions meeting the quality threshold determined by $B-m$.

\begin{theorem}
\label{thm:gdgs-size}
There are at least 
$$\sum_{i=1}^{\lfloor m/c_{\max}\rfloor} \binom{n-\lfloor (B-m)/c_{\min} \rfloor}{i}$$
distinct feasible solutions $X$ with $f(X) \geq (\alpha_f/2) (1-e^{-\alpha_f+o(1)})\cdot f(OPT)$.
\end{theorem}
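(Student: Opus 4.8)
The plan is to mimic the proof of Theorem~\ref{thm:dgs-size}, counting feasible supersets of the greedy seed, but replacing the cardinality bookkeeping by cost bookkeeping. By the preceding quality theorem the seed produced in the greedy phase of GDGS already attains $f\ge(\alpha_f/2)(1-e^{-\alpha_f+o(1)})\cdot f(OPT)$, and since $f$ is monotone every superset $X$ of this seed inherits the same bound, $f(X)\ge(\alpha_f/2)(1-e^{-\alpha_f+o(1)})\cdot f(OPT)$. Thus it suffices to exhibit many distinct feasible supersets of the seed.

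First I would bound the size of the seed. The greedy loop only keeps an element while the accumulated cost stays below $B-m$, so the seed $S$ satisfies $c(S)\le B-m$; as each selected element costs at least $c_{\min}$, the seed contains at most $\lfloor (B-m)/c_{\min}\rfloor$ elements. Hence at least $n-\lfloor (B-m)/c_{\min}\rfloor$ elements of $V$ are still available to be appended.

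Next I would bound how many leftover elements can be added without breaking the budget $B$. Since $c(S)\le B-m$, a residual budget of at least $m$ remains, and because every element costs at most $c_{\max}$, any collection of at most $\lfloor m/c_{\max}\rfloor$ leftover elements can be added while keeping the total cost at most $(B-m)+c_{\max}\cdot\lfloor m/c_{\max}\rfloor\le B$, so $X$ stays feasible. Therefore, for every $i$ with $1\le i\le\lfloor m/c_{\max}\rfloor$, each choice of $i$ distinct leftover elements yields a feasible solution satisfying the quality bound; there are at least $\binom{n-\lfloor (B-m)/c_{\min}\rfloor}{i}$ such choices, using that the true number of available elements is at least $n-\lfloor (B-m)/c_{\min}\rfloor$ and that $\binom{\cdot}{i}$ increases with its upper argument. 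Solutions built from different $i$ have different cardinalities and are therefore distinct, so summing over $i$ yields the claimed $\sum_{i=1}^{\lfloor m/c_{\max}\rfloor}\binom{n-\lfloor (B-m)/c_{\min}\rfloor}{i}$.

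The step I expect to be the main obstacle is pinning down the seed consistently. The preceding theorem states the quality guarantee for $T=\argmax_{Y\in\{S,\{y\}\}}f(Y)$ rather than for the greedy set $S$ itself, whereas the two cost estimates above (at most $\lfloor (B-m)/c_{\min}\rfloor$ elements, and residual budget at least $m$) are the natural ones for $S$ with $c(S)\le B-m$. I would therefore need to argue that the object to which leftover elements are appended simultaneously meets the threshold and has cost at most $B-m$, handling the case $T=\{y\}$ separately when $f(\{y\})>f(S)$. Reconciling that case split with the single clean count $n-\lfloor (B-m)/c_{\min}\rfloor$, $\lfloor m/c_{\max}\rfloor$ is where the argument must be made precise; once a single threshold-meeting seed of cost at most $B-m$ is fixed, the remaining counting is routine.
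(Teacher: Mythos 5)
Your proposal follows the paper's proof essentially verbatim: the paper likewise uses $c(T)\leq B-m$ to bound the seed size by $|T|\leq \lfloor (B-m)/c_{\min}\rfloor$, observes that at least $\lfloor m/c_{\max}\rfloor$ further elements fit within the residual budget, and sums the binomial counts over the added subsets, with monotonicity transferring the quality bound to all supersets. The case split you flag around $T=\{y\}$ is simply asserted away in the paper (it states outright that $T$ has cost at most $B-m$), and once that is granted your two estimates apply to $T$ directly, since $|\{y\}|=1\leq\lfloor (B-m)/c_{\min}\rfloor$ --- so your argument is, in substance, the paper's argument, stated with more care about distinctness and about which seed meets the threshold.
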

\begin{proof}
Obtaining the set $T$ of costs at most $B-m$, we can pick at least $\lfloor m/c_{\max} \rfloor$ additional elements. For the number of elements in the set $T$ produced by GDGS, we have $|T|\leq \lfloor (B-m)/c_{\min} \rfloor$. This implies that there are
at least
$$
\sum_{i=1}^{\lfloor m/c_{\max}\rfloor} \binom{n-\lfloor (B-m)/c_{\min} \rfloor}{i} 
$$
different sets of solutions with the desired approximation quality.
\end{proof}
Note, that in order to obtain $\mu$ of these solutions, GDGS creates $\mu$ solutions by randomly selecting unchosen elements until no further element can be included without violating the constraint bound $B$. This implies that in most situations, solutions with a cost close to the constraint bound $B$ are generated.

\section{Entropy-Based Evolutionary Diversity Optimisation}

We now introduce a simple evolutionary diversity optimisation approach to create high quality diverse sets of solutions.

In order to maximise diversity of our set of $\mu$ solutions we introduce a threshold $f_T$ and aim to produce a set of solutions maximizing diversity under the condition that for all $x \in P$, $f(x) \geq f_{\min}$ holds.

\subsection{Entropy-Based Diversity Measure}
We define the entropy of a given population $P$ for the set of input elements $V=\{v_1, \ldots, v_n\}$ as
$$
H(P) = - \sum_{i=1}^n p(v_i) \log_2 p(v_i).
$$
and aim to maximize this value under the condition that all solutions in $P$ meet a given quality criterion.
We use $\log(x)$ instead of $\log_2(x)$ in the following to simplify notations.
For a given population $P$ of size $\mu$, we use
$$p(v_i)= |\{P_j \in P\mid v_i \in P_j\}| / \mu$$
which equals the fraction of solutions in $P$ that contain element $v_i$.
Note, that elements not present in the population or elements present in all solutions have a contribution of $0$ to $H(P)$.
The function $-p(x) \log(x)$ is concave in $[0,1]$ and monotonically increasing in $[0,1/e]$.
This implies that increasing the fraction of solutions in $P$ until a fraction of $1/e$ is obtained is rewarded by the entropy diversity measure. In addition $-p(x) \log(x)$ is monotonically decreasing in $[1/e,1]$ which means that increasing fractions to larger than $1/e$ reduces diversity with respect to the considered entropy measure.

\subsection{Limitations of Diversifying Greedy Sampling}
Before introducing the evolutionary diversity optimisation approach, we point out some limitations of diversifying greedy sampling that show in which way diversity may be improved in common situations.
We consider DGS which selects for each solution $P_i$ the same set of elements $S$ with $|S|=B-m$ and adds exactly $m$ additional elements afterwards. This implies that there is no contribution of the elements in $S$ to the entropy score and we can upper bound the entropy of a population as follows.

\begin{theorem}
Let $P$ be a population produced by DGS. Then we have 
$$H(P) \leq -m \log \left(\frac{m}{n-B+m} \right) \leq \frac{n-B+m}{e} \cdot \log (1/e).$$
\end{theorem}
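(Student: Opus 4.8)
The plan is to exploit the rigid structure of the populations produced by DGS: every solution $P_i$ shares the same greedy core $S$ with $|S|=B-m$ and differs only in the $m$ elements it picks from the $n-(B-m)=n-B+m$ elements lying outside $S$. First I would split the entropy sum according to whether an element belongs to $S$. Every $v_i \in S$ occurs in all $\mu$ solutions, so $p(v_i)=1$ and its contribution $-1\cdot\log 1$ vanishes, as already observed in the text. Hence $H(P)$ reduces to a sum of terms $-p(v_j)\log p(v_j)$ taken only over the $k:=n-B+m$ \emph{free} elements outside $S$.

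Next I would record the one global constraint imposed by the sampling rule. Since each $P_i$ adds exactly $m$ free elements, the total number of incidences between solutions and free elements is $\mu m$; dividing by $\mu$ gives $\sum_{v_j \notin S} p(v_j)=m$, so the free-element frequencies have average $\bar p = m/k$. This constraint is what forces the entropy maximiser to be uniform.

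For the first inequality I would invoke the concavity of $g(p)=-p\log p$ on $[0,1]$, which the paper has already stated. Jensen's inequality gives $\frac{1}{k}\sum_{v_j\notin S} g(p(v_j)) \le g(\bar p)=g(m/k)$, and multiplying by $k$ yields $H(P)\le k\,g(m/k) = -m\log\!\left(\frac{m}{n-B+m}\right)$, which is exactly the middle expression. For the second inequality I would use that $g$ attains its global maximum on $[0,1]$ at $p=1/e$ with value $g(1/e)=-\tfrac1e\log\tfrac1e=\tfrac1e$ (the text already notes that $g$ increases on $[0,1/e]$ and decreases on $[1/e,1]$). Writing the middle quantity once more as $k\,g(m/k)$ and bounding $g(m/k)\le g(1/e)$ gives $H(P)\le k\,g(1/e)=\frac{n-B+m}{e}$, matching the claimed bound $-\frac{n-B+m}{e}\log(1/e)$. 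Equivalently, setting $t=m/k$ reduces this step to the single-variable inequality $-t\log t\le 1/e$.

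I do not expect a deep obstacle here; the argument is essentially two applications of the shape of $g$. The two points requiring care are (a) deriving and then genuinely using the constraint $\sum p(v_j)=m$, since it is precisely this that makes the uniform allocation $p(v_j)=m/k$ the entropy-maximiser rather than an arbitrary distribution, and (b) keeping the sign of $\log(1/e)$ straight so that the final numerical bound emerges as the positive quantity $\frac{n-B+m}{e}$, as any entropy must be. The likeliest place to slip is the second inequality, where $g(m/k)\le g(1/e)$ must be justified as a consequence of $1/e$ being the global maximiser of $g$ on all of $[0,1]$, valid for every admissible ratio $t=m/k\in(0,1]$.
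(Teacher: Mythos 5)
Your proof is correct and takes essentially the paper's route: the $B-m$ shared elements of $S$ contribute zero, concavity of $g(p)=-p\log p$ forces the uniform allocation to maximise the free-element sum (your Jensen step with the explicit constraint $\sum_{v\notin S}p(v)=m$ just makes rigorous what the paper asserts informally as ``occurrences differing by at most one at fraction $1/e$''), and the global maximum of $g$ at $p=1/e$ yields the second bound. You also correctly straighten out the sign of the final bound, which as printed in the paper, $\frac{n-B+m}{e}\cdot\log(1/e)$, is negative and should read $-\frac{n-B+m}{e}\log\left(\frac{1}{e}\right)$; the only slip on your side is evaluating this as $\frac{n-B+m}{e}$, which presumes natural logarithms, whereas under the paper's $\log=\log_2$ convention the value is $\frac{n-B+m}{e}\log_2 e$.
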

\begin{proof}
The entropy of a population is maximal if the number of occurrences of all elements in $P$ differs by at most $1$ and the fraction of occurrences of all elements is $1/e$.
This is due to the fact that the function $f(x) = - x \log x$ is monotonically increasing in $[0,1/e]$ and monotonically decreasing in $[1/e,1]$ and that the second derivative $f''(x) = -\frac{1}{\ln(2)\cdot x}$ is negative in $[0,1]$.
We assume that the first $B-m$ elements are contained in each of the $\mu$ solutions. This means that these elements do not contribute to the entropy measure
The maximum entropy of a population $P$ obtained by DGS is

\begin{eqnarray*}
H(P) & = & - \sum_{i=1}^n p(v_i) \log p(v_i) \\
& = & -\sum_{i=B-m+1}^{n} p(v_i) \log p(v_i)\\
& \leq & -\sum_{i=B-m+1}^{n} \frac{m}{n-B+m} \log \left(\frac{m}{n-B+m} \right)\\
& = & -(n-B+m) \frac{m}{n-B+m} \log \left(\frac{m}{n-B+m} \right)\\
& = & -m \cdot \log \left(\frac{m}{n-B+m} \right)\\
& \leq & (n-B+m) \cdot \frac{1}{e} \cdot \log (1/e) 
\end{eqnarray*}

\end{proof}

To illustrate the benefit of using the $B-m$ elements of $S$ to improve diversity, we consider the simple example of the function $OneMax(X) = |X|$ which is the simplest non-trivial submodular function.

Assume that $B/n$ is an integer. For OneMax, distributing $\mu B$ elements equally among the $n$ positions would give a population $P^*$ with $p(v_i) = B/n, 1 \leq i \leq n$ and we have 

\begin{eqnarray*}
H(P^*) & = & - \sum_{i=1}^n p(v_i) \log p(v_i) \\
& = & -\sum_{i=1}^n (B/n) \log (B/n)\\
& = & -B\log (B/n)
\end{eqnarray*}


Assuming $B=\lceil n/e \rceil$ (or $B=\lfloor n/e \rfloor$ depending on rounding to the next integer), then $H(P^*)$ is the maximal among all possible populations if each item appears in the population $\lfloor \mu/e \rfloor$ or $\lceil \mu/e \rceil$ times. In this case, the maximal entropy value that can be obtained is approximately $H(P) = \frac{n}{e} \cdot \log(1/e)$.

\subsection{Evolutionary Diversity Optimisation}

The aim is to further improve the sets of solutions created by the diversifying greedy sampling approaches. Given the set of solutions $P=\{P_1,  \ldots, P_{\mu}\}$ produced by a diversifying greedy approach, we set a quality threshold $f_{\min}= \min_{i=1}^{\mu} f(P_i)$ and improve diversity of the set $P$ under the condition that all solutions in $P$ are feasible and have function value at least $f_{\min}$. 

We use Algorithm~\ref{alg:divea} to compute a high quality diverse population where each individual $I$ has to meet a given quality criteria $f_{\min}$ according to a given budget. An individual $I \subseteq V$ is a set of elements of $V$. 
At first, the initial population $P$ is generated with $\mu$ individuals created by the diversifying greedy sampling approaches. In each iteration of DIVEA exactly one offspring $I'$ is produced by mutation.
We use a mutation operator matching standard bit mutations for bit-strings. Given an individual $I$ we produce a new individual $I'$ by copying $I$ and changing the status of each $v_i \in V$ for being included or excluded in $I'$ with probably $1/n$. If the offspring $I'$ meets the quality threshold $f_{\min}$ and the budget constraint, then $I'$ is added to the population. If $I'$ is added to the population, one individual is selected for removal ensuring that the population size after each iteration is $\mu$. An individual $I \in P$ is removed such that $H(P\setminus \{I\})$ is maximal among all individuals $J \in P$. This implies that the resulting population $P$ has the highest possible entropy of all sets of $\mu$ individuals available in iteration $t$.
The algorithm iterates for $t_{\max}$ iterations and outputs the final population.

It should be noted that DIVEA is a very simple baseline algorithm and the goal here is to show how evolutionary algorithms can be used to further improve diversity of the sets obtained by the diversifying greedy sampling approaches.

\begin{algorithm}[t]
\SetKwInOut{Input}{input}
\Input{
Initial population $P=\{P_1, \ldots, P_{\mu}\}$, threshold $f_{\min}= \min_{i=1}^{\mu} f(P_i)$, maximum number of iterations $t_{\max}$.}
$S \leftarrow P_i$\;
$t \leftarrow 0$\;
\While{$t\leq t_{\max}$}{
$t \leftarrow t+1$\;
Choose $I \in P$ uniformly at random and produce an offspring $I'$ of $I$ by mutation\;
\If{$(f(I') \geq f_{\min}) \wedge (c(I) \leq B)$}{ $P \leftarrow P \cup \{I'\}$\;
Remove exactly one individual $I$, with $I=\arg \max_{J \in P} H(P\setminus \{J\})$ from $P$\;}
}
 \Return{$P=\{P_1, \ldots, P_{\mu}\}$}\;
\caption{Diversifying EA (DIVEA)}
\label{alg:divea}
\end{algorithm}

\section{Experimental Investigations}
We examine the introduced algorithms on the submodular influence maximization problem~\citep{DBLP:conf/ijcai/QianSYT17,DBLP:conf/kdd/LeskovecKGFVG07} and the maximum coverage problem~\cite{DBLP:journals/ipl/KhullerMN99,DBLP:journals/jacm/Feige98}.

\subsection{The Influence Maximization Problem}
The influence maximization problem (IM) aims to identify a set of the most influential users in a social network. IM intents to maximise the spread of influence through a social network i.e. a graph of social interactions within a group of users~\cite{DBLP:conf/kdd/KempeKT03}.

The social network is modeled as a directed graph $G=(V,E)$ where each node represents a user, and each edge $(u,v) \in E$ has been assigned an edge probability $p_{u,v}$ which indicates that user $u$ influences user $v$. The aim of the influence maximization problem is to find a subset $X \subseteq V$ such that the number of activated nodes of $X$ is maximised.

\subsection{The Maximum Coverage Problem}
The maximum coverage problem~\cite{DBLP:journals/ipl/KhullerMN99,DBLP:journals/jacm/Feige98} is a classical NP-hard submodular optimisation problem and arises frequently in a variety of settings. Given a set $U$ of elements, a collection $V = \{V_1,V_2,\ldots,V_n\}$ of subsets of $U$, a cost function $c: 2^V\rightarrow \R^+$
and a budget $B$, the goal is to find a collection  of subsets $X^* \subseteq V$ such that the number of covered elements is maximized subject to meeting the cost constraint, i.e.
$$X^* = \argmax_{X  \subseteq  V}  \left\{|\cup_{V_i \in X} V_i| \mid c(X) \leq B \right\}.$$

\subsection{Experimental setting}
For our experiments, we use real-world graph frb30-15-01 that contains $450$ nodes and $17\,827$ edges from~\cite{datasetsfrb}. In the case of the maximum coverage problem, the set $U$ consist of the vertices of the graph and for each vertex $v_i$, we construct a set $V_i$ that includes the vertex itself and its adjacent vertices with a higher node number.

We consider uniform and knapsack constraints.
In the uniform setting the cost of a set of chosen nodes $X$ for the maximum influence problem is $c(X) = |X|$. In the uniform setting for maximum coverage problem, the cost is a solution $X$ is given by the number of chosen sets, i.e. we have $c(X) = |\{V_i \mid V_i \in X\}|$.
In the case of knapsack constraints, the cost of a node $v$ in the influence maximization problem is given by $c(v) = \text{deg}(v)+1$, where $\text{deg}(v)$ denotes the outdegree of $v$ in $G$. The cost of a given set of nodes $X$ is given as $c(X) = \sum_{v \in X} c(v)$.
For the maximum coverage problem, the cost of a set $V_i$ is $c(V_i) = |V_i|$ and the cost of a solution $X$ is given as $c(X) = \sum_{V_i \in X}|V_i|$.
\begin{table}[!t]
\renewcommand{\arraystretch}{1.4}  
\renewcommand\tabcolsep{4pt} 
\begin{tiny}
\centering
\begin{tabular}{@{}ccccc|rrc|rrc}
\toprule  
     
\multirow{2}{*}{$B$} & \multirow{2}{*}{$m$} &
\multirow{2}{*}{$\mu$} & \multicolumn{2}{c}{\bfseries Threshold (1)} & \multicolumn{3}{c}{\bfseries DGS Entropy (2)} & \multicolumn{3}{c}{\bfseries DIVEA Entropy (3)}\\
\cmidrule(l{2pt}r{2pt}){4-5} \cmidrule(l{2pt}r{2pt}){6-8} \cmidrule(l{2pt}r{2pt}){9-11} 
 &  &  & \textbf{mean}& \textbf{std}  & \textbf{mean} & \textbf{std} & \textbf{stat} &\textbf{mean} & \textbf{std}& \textbf{stat} \\
\midrule
\multirow{2}{*}{10}&2&5&43.13&0.1826&4.6039&0.1221&$3^{(-})$&\textbf{23.2193}&0.0000&$2^{(+)}$\\   
                    &2&10&40.07&1.6758 & 6.5572&0.1456&$3^{(-)}$&\textbf{33.1793}&0.2191&$2^{(+)}$\\

\multirow{2}{*}{10}&2&15&39.46&1.2249&7.7160&0.1353&$3^{(-)}$&\textbf{39.0422}&0.1074&$2^{(+)}$\\    
                    &2&20&39.39&1.3378&8.4572&0.1332&$3^{(-)}$&\textbf{43.1293}&0.1826&$2^{(+)}$\\

\midrule
\multirow{2}{*}{10}&5&5&40.37&1.8864& 11.4096&0.2729&$3^{(-)}$&\textbf{23.2193}&0.0000&$2^{(+)}$\\    
                      &5&10&39.22&1.4137&16.2096&0.2828&$3^{(-)}$&\textbf{33.2193}&0.0000&$2^{(+)}$\\ 

\multirow{2}{*}{10}&5&15&38.80&1.2474&  18.7255&0.3123&$3^{(-)}$&\textbf{39.0645}&0.0243&$2^{(+)}$\\    
                 &5&20&38.76&1.0637&  20.6854&0.2107&$3^{(-)}$&\textbf{43.1780}&0.0948&$2^{(+)}$\\

\midrule
\multirow{2}{*}{10}&8&5&37.81&1.5747 &18.0954&0.5594&$3^{(-)}$&\textbf{23.2193}&0.0000&$2^{(+)}$\\    
                   &8&10&37.52&1.6507&25.2803&0.6395&$3^{(-)}$&\textbf{33.2193}&0.0000&$2^{(+)}$\\

\multirow{2}{*}{10}&8&15&37.47&1.5072&29.4682&0.4243&$3^{(-)}$&\textbf{39.0689}&0.0000&$2^{(+)}$\\    
                    &8&20&37.27&0.8154&32.2052&0.3433&$3^{(-)}$&\textbf{43.2193}&0.0000&$2^{(+)}$\\
\midrule
\end{tabular}
\caption{Results for the influence maximization problem with uniform constraints for $B = 10$.}

\label{tb:Results_1}
\end{tiny}
\end{table}    

\begin{table}[!t]
\renewcommand{\arraystretch}{1.4}  
\renewcommand\tabcolsep{3pt} 
\centering
\begin{tiny}
\begin{tabular}{@{}ccccc|rrc|rrc}
\toprule 
     
\multirow{2}{*}{$B$} & \multirow{2}{*}{$m$} &
\multirow{2}{*}{$\mu$} & \multicolumn{2}{c}{\bfseries Threshold (1)} & \multicolumn{3}{c}{\bfseries DGS Entropy (2)} & \multicolumn{3}{c}{\bfseries DIVEA Entropy (3)}\\
\cmidrule(l{2pt}r{2pt}){4-5} \cmidrule(l{2pt}r{2pt}){6-8} \cmidrule(l{2pt}r{2pt}){9-11} 
 &  &  & \textbf{mean}& \textbf{std}  & \textbf{mean} & \textbf{std} & \textbf{stat} &\textbf{mean} & \textbf{std}& \textbf{stat} \\
\midrule

\multirow{2}{*}{10}&2&5&429.70&1.2360&4.6439&0.0000&$3^{(-})$&\textbf{18.0233} &1.9351&$2^{(+)}$\\    
&2&10&428.60&0.9322&6.5972&0.1137&$^{3(-)}$&\textbf{23.2874}&1.4800&$2^{(+)}$\\

\multirow{2}{*}{10}&2&15&427.90&0.9948&7.6921&0.1320&$3^{(-)}$&\textbf{25.4377} &1.5628&$2^{(+)}$\\    
&2&20&427.50&1.0009&8.4959&0.9010&$3^{(-)}$&\textbf{26.6811}&1.8844&$2^{(+)}$\\

\midrule
\multirow{2}{*}{10}&5&5&429.53&1.4320&11.3263&0.2959&$3^{(-)}$&\textbf{23.2059}&0.0731&$2^{(+)}$\\    
&5&10&428.83&1.0530&16.0237&0.3192&$3^{(-)}$&\textbf{33.1276}&0.1868&$2^{(+)}$\\

\multirow{2}{*}{10}&5&15&427.80&1.0635&18.7877&0.3068&$3^{(-)}$&\textbf{38.6461}&0.3739&$2^{(+)}$\\    
&5&20&427.73&0.8683&20.5862&0.2211&$3^{(-)}$&\textbf{42.0368}&0.5832&$2^{(+)}$\\
\midrule
\multirow{2}{*}{10}&8&5&383.83&8.2716&18.1037&0.4311&$3^{(-)}$&\textbf{23.2193}&0.0000&$2^{(+)}$\\    
&8&10&382.77&6.5003&25.2569&0.5274&$3^{(-)}$&\textbf{33.2198}&0.0000&$2^{(+)}$\\ 

\multirow{2}{*}{10}&8&15&378.97&5.4818&29.3765&0.5114&$3^{(-)}$&\textbf{39.0689}&0.0000&$2^{(+)}$\\   
&8&20&374.90&7.4941&32.1019&0.4154&$3^{(-)}$&\textbf{43.2193}&0.0000&$2^{(+)}$\\
\midrule

\multirow{2}{*}{15}&2&5&449.00&0.0000&4.6305&0.0731&$3^{(-})$&\textbf{27.6151}&2.2429&$2^{(+)}$\\    
&2&10&449.00&0.0000&6.5639&0.5704&$3^{(-)}$&\textbf{31.0779}&2.1572&$2^{(+)}$\\

\multirow{2}{*}{15}&2&15&449.00&0.0000&7.6565&0.1479&$3^{(-)}$&\textbf{33.0764}&1.7469&$2^{(+)}$\\    
&2&20&449.00&0.0000&8.5205&0.0917&$3^{(-)}$&\textbf{33.7519}&1.8101&$2^{(+)}$\\
\midrule
\multirow{2}{*}{15}&5&5&444.17&1.1167&11.3563&0.3401&$3^{(-)}$&\textbf{34.6772}&0.3627&$2^{(+)}$\\    
&5&10&443.80&1.0954&16.0621&0.2732&$3^{(-)}$&\textbf{47.8240}&1.3957&$2^{(+)}$\\

\multirow{2}{*}{15}&5&15&443.47&1.0742&18.6366&0.2732&$3^{(-)}$&\textbf{53.7846}&1.9886&$2^{(+)}$\\    
&5&20&442.90&0.9595&20.5957&0.2840&$3^{(-)}$&\textbf{57.4155}&1.9886&$2^{(+)}$\\
\midrule
\multirow{2}{*}{15}&8&5&435.77&1.9061&18.0154&0.4014&$3^{(-)}$&\textbf{34.8289}&0.0000&$2^{(+)}$\\   
&8&10&434.37&2.2047&25.3553&0.4093&$3^{(-)}$&\textbf{49.8223}&0.0365&$2^{(+)}$\\

\multirow{2}{*}{15}&8&15&434.23&1.9420&29.2837&0.4798&$3^{(-)}$&\textbf{58.4817}&0.1664&$2^{(+)}$\\    
&8&20&434.17&1.4162&32.0748&0.4107&$3^{(-)}$&\textbf{64.0082}&0.3968&$2^{(+)}$\\
\midrule

\end{tabular}
\caption{Results for the maximum coverage problem with uniform constraints for $B = 10, 15$.}
\label{tb:Results_2}
\end{tiny}
\end{table}    
\subsection{Diverse Sets for Uniform Constraints}
We consider the results for the diversifying greedy sampling and diversifying evolutionary algorithm with all the weights $1$.

The experimental results of the influence maximization problem and maximum coverage problem for the DGS and DIVEA are shown in Table~\ref{tb:Results_1} and Table~\ref{tb:Results_2}, respectively. For the experimental investigations, we consider all combinations of $m = 2, 5, 8$ and $\mu = 5, 10, 15, 20$ for $B = 10, 15$. For each instance, we run each algorithm $30$ times and record the final population. We round the standard deviation to $4$ decimal points. Note that for estimating the influence spread, we simulate the information diffusion process among the users $100$ times independently.  

We compare the results in terms of the entropy values obtained by the DGS and DIVEA at each $m$ and $\mu$ for budgets $B = 10, 15$. In order to test the statistical significance of the results we use the Kruskal-Wallis test with $95\%$ confidence in order to measure the statistical validity of our results. We apply the Bonferroni post-hoc statistical procedure that is used for multiple comparisons of a control algorithm to two or more algorithms~\cite{Corder09}. $Y^{(+)}$ is equivalent to the statement that the algorithm in the column outperformed algorithm $Y$ (see numbers behind algorithm names in the top rows of the tables). $Y^{(-)}$ is equivalent to the statement that $Y$ outperformed the algorithm given in the column.

The results in the Table~\ref{tb:Results_1} show that the DIVEA for the influence maximization problem performs significantly better with respect to diversity than the DGS approach for $B = 10$. We see a similar result in terms of threshold values where DIVEA is able to attain the quality level achieved by DGA and additionally to produce significantly higher entropy values than DGS for all of the cases. In particular, the DIVEA creates higher entropy values in comparison to the results produced by DSG for the smallest margin $m = 2$.

Table~\ref{tb:Results_2} shows that DIVEA obtains higher entropy values than the DGS among all considered combinations of $m$ and $\mu$. It should be noted that higher entropy values $H(P)$ indicate a higher diversification of the population. In the case of a higher margin $m$, e.g. $m =8$, DIVEA performs consistently better. Furthermore, Table~\ref{tb:Results_2} includes threshold values obtained by the DGS for all the combinations of $m$ and $\mu$. We observe that the threshold values decrease with increasing size of the margin and number of individuals in the population. The results suggest that the DGS algorithm is able to create diverse solutions and simultaneously maintain a similar quality. Predominantly, DIVEA is able to provide the approximation quality achieved by the greedy approaches and to achieve significantly higher entropy values than DGS for all of the cases.

\subsection{Diverse Sets for Knapsack Constraints}

We now consider the generalized diversifying greedy sampling and diversifying evolutionary algorithm for the setting where the weights depend on the degree of the node of the graph. We consider budgets $B = 100$ and margins $m = 10, 20, 30$. The sizes of the populations is the same as for the uniform constraints. Table~\ref{tb:Results_3} and Table~\ref{tb:Results_4} include threshold and entropy values obtained by the GDGS and the DIVEA for the combination of $m$ and $\mu$ for the influence maximization problem and maximum coverage problem. For the setting, where $B$ elements can be used, the obtained threshold value for the GDGS and $B = 100$, $\mu =5$ and $m =10, 20, 30$ is $48.34$, $46.04$ and $45.88$ for the influence maximization problem, respectively. Additionally, the threshold value for the GDGS and $B = 100$, $\mu =5$ and $m =10, 20, 30$ is 
$406.30$, $398.53$ and $388.57$ for the maximum coverage problem.
Furthermore, we compare the results in terms of the entropy values obtained by the GDGS and the DIVEA. We observe that the DIVEA outperforms the GDGS for all combinations of $B$, $m$ and $\mu$ for both problems. The entropy values of the approaches are overall increasing for each margin value when the number of populations increases. The results also show that the DIVEA is able to more directly improve diversity of the population gathered by the GDGS approach as the number of the margin increases. 
\begin{table}[!t]
\renewcommand{\arraystretch}{1.4}  
\renewcommand\tabcolsep{4pt} 
\centering
\begin{tiny}
\begin{tabular}{@{}ccccc|rrc|rrc}
\toprule  
     
\multirow{2}{*}{$B$} & \multirow{2}{*}{$m$} &
\multirow{2}{*}{$\mu$} & \multicolumn{2}{c}{\bfseries Threshold (1)} & \multicolumn{3}{c}{\bfseries DGS Entropy (2)} & \multicolumn{3}{c}{\bfseries DIVEA Entropy (3)}\\
\cmidrule(l{2pt}r{2pt}){4-5} \cmidrule(l{2pt}r{2pt}){6-8} \cmidrule(l{2pt}r{2pt}){9-11} 
 &  &  & \textbf{mean}& \textbf{std}  & \textbf{mean} & \textbf{std} & \textbf{stat} &\textbf{mean} & \textbf{std}& \textbf{stat} \\
\midrule
\multirow{2}{*}{100}&10&5&48.34&3.0483&2.3422&0.4522&$3^{(-})$&\textbf{14.7342}&2.4239&$2^{(+)}$\\    
                    &10&10&47.57&3.2168 & 2.9609&0.4285&$3^{(-)}$&\textbf{16.9616}&2.5704&$2^{(+)}$\\

\multirow{2}{*}{100}&10&15&47.54&2.5498&3.4633&0.5369&$3^{(-)}$&\textbf{17.1217}&2.0946&$2^{(+)}$\\    
                    &10&20&47.50&2.7268&3.4882&0.4566&$3^{(-)}$&\textbf{17.7975}&2.2266&$2^{(+)}$\\

\midrule
\multirow{2}{*}{100}&20&5&46.04&5.0085& 3.2619&0.4084&$3^{(-)}$&\textbf{15.7603}&2.4532&$2^{(+)}$\\    
                      &20&10&45.70&3.3142&4.6998&0.4005&$3^{(-)}$&\textbf{18.5986}&2.4380&$2^{(+)}$\\

\multirow{2}{*}{100}&20&15&45.61&2.1911&  5.4380&0.5318&$3^{(-)}$&\textbf{19.4041}&1.4929&$2^{(+)}$\\    
                 &20&20&45.45&2.5654&  5.7699&0.4936&$3^{(-)}$&\textbf{20.0244}&1.5025&$2^{(+)}$\\

\midrule
\multirow{2}{*}{100}&30&5&45.88&1.6549 &4.1523&0.5008&$3^{(-)}$&\textbf{16.1894}&2.1960&$2^{(+)}$\\    
                   &30&10&44.03&3.9920&6.1409&0.6590&$3^{(-)}$&\textbf{19.4785}&1.8216&$2^{(+)}$\\

\multirow{2}{*}{100}&30&15&41.62&3.1492&7.0700&0.3484&$3^{(-)}$&\textbf{21.4490}&2.0305&$2^{(+)}$\\    
                    &30&20&40.17&1.7436&7.1638&0.3493&$3^{(-)}$&\textbf{22.1779}&1.1061&$2^{(+)}$\\
\midrule
\end{tabular}
\caption{Results for the influence maximization problem with knapsack constraints for $B = 100$.}

\label{tb:Results_3}
\end{tiny}
\end{table}    
\begin{table}[!t]
\renewcommand{\arraystretch}{1.4}  
\renewcommand\tabcolsep{4pt} 
\centering

\begin{tiny}
\begin{tabular}{@{}ccccc|rrc|rrc}
\toprule  
     
\multirow{2}{*}{$B$} & \multirow{2}{*}{$m$} &
\multirow{2}{*}{$\mu$} & \multicolumn{2}{c}{\bfseries Threshold (1)} & \multicolumn{3}{c}{\bfseries DGS Entropy (2)} & \multicolumn{3}{c}{\bfseries DIVEA Entropy (3)}\\
\cmidrule(l{2pt}r{2pt}){4-5} \cmidrule(l{2pt}r{2pt}){6-8} \cmidrule(l{2pt}r{2pt}){9-11} 
 &  &  & \textbf{mean}& \textbf{std}  & \textbf{mean} & \textbf{std} & \textbf{stat} &\textbf{mean} & \textbf{std}& \textbf{stat} \\
\midrule
\multirow{2}{*}{100}&10&5&406.30&0.7022&2.3190&0.3801&$3^{(-})$&\textbf{5.1566}&1.2727&$2^{(+)}$\\    
                    &10&10&406.03&0.1826 & 3.1748&0.2398&$3^{(-)}$&\textbf{5.7382}&0.9848&$2^{(+)}$\\

\multirow{2}{*}{100}&10&15&406.00&0.0000&3.4411&0.2587&$3^{(-)}$&\textbf{6.1239}&0.9848&$2^{(+)}$\\    
                    &10&20&406.00&0.0000&3.6965&0.2307&$3^{(-)}$&\textbf{6.6749}&1.0283&$2^{(+)}$\\

\midrule
\multirow{2}{*}{100}&20&5&398.53&1.6761& 3.5986&0.4343&$3^{(-)}$&\textbf{10.0783}&1.4947&$2^{(+)}$\\    
                      &20&10&397.43&1.0726&5.0012&0.3948&$3^{(-)}$&\textbf{11.5237}&1.4219&$2^{(+)}$\\

\multirow{2}{*}{100}&20&15&397.07&0.9444&  5.7040&0.3835&$3^{(-)}$&\textbf{11.8965}&0.9952&$2^{(+)}$\\    
                 &20&20&396.77&0.9714&  6.1189&0.4051&$3^{(-)}$&\textbf{12.8706}&1.2013&$2^{(+)}$\\

\midrule
\multirow{2}{*}{100}&30&5&388.57&2.3294 &4.3088&0.6259&$3^{(-)}$&\textbf{13.4104}&1.4536&$2^{(+)}$\\    
                   &30&10&387.93&2.1804&6.2473&0.5190&$3^{(-)}$&\textbf{14.7949}&1.3123&$2^{(+)}$\\

\multirow{2}{*}{100}&30&15&386.97&2.1573&7.2866&0.4094&$3^{(-)}$&\textbf{15.7160}&1.0320&$2^{(+)}$\\    
                    &30&20&386.17&1.7436&7.8370&0.4966&$3^{(-)}$&\textbf{16.1779}&1.1061&$2^{(+)}$\\
\midrule
\end{tabular}
\caption{Results for the maximum coverage problem with knapsack constraints for $B = 100$.}

\label{tb:Results_4}
\end{tiny}
\end{table}    
\subsection{Conclusions}
We have presented approaches for creating diverse sets of solutions for monotone functions under given constraints. Our diversifying greedy sampling approaches create sets of solutions with provable guarantees that match the current best performance ratios obtained by greedy algorithms. Furthermore, we have examined the short-comings in terms of the entropy diversity measure and proposed an entropy-based evolutionary diversity optimisation approach to improve the diversity of the populations obtained by the diversifying greedy sampling approaches. Our experimental results show that high quality sets of solutions can be obtained for important submodular optimisation problems and that the evolutionary diversity optimisation approach significantly increases the entropy diversity of the sets created.
\section{Acknowledgements}
This work has been supported by the Australian Research Council through grant DP190103894, and by the South Australian Government through the Research Consortium "Unlocking Complex Resources through Lean Processing".

\bibliography{references}

\begin{thebibliography}{25}
\providecommand{\natexlab}[1]{#1}
\providecommand{\url}[1]{\texttt{#1}}
\providecommand{\urlprefix}{URL }
\expandafter\ifx\csname urlstyle\endcsname\relax
  \providecommand{\doi}[1]{doi:\discretionary{}{}{}#1}\else
  \providecommand{\doi}{doi:\discretionary{}{}{}\begingroup
  \urlstyle{rm}\Url}\fi

\bibitem[{Corder and Foreman(2009)}]{Corder09}
Corder, G.~W.; and Foreman, D.~I. 2009.
\newblock \emph{{Nonparametric statistics for non-statisticians: a step-by-step
  approach}}.
\newblock Wiley.
\newblock ISBN 047045461X.

\bibitem[{Do et~al.(2020)Do, Bossek, Neumann, and
  Neumann}]{DBLP:conf/gecco/DoBN020}
Do, A.~V.; Bossek, J.; Neumann, A.; and Neumann, F. 2020.
\newblock Evolving diverse sets of tours for the travelling salesperson
  problem.
\newblock In \emph{Proceedings of the Genetic and Evolutionary Computation
  Conference, {GECCO 2020}}, 681--689. {ACM}.

\bibitem[{Feige(1998)}]{DBLP:journals/jacm/Feige98}
Feige, U. 1998.
\newblock A Threshold of ln \emph{n} for Approximating Set Cover.
\newblock \emph{J. {ACM}} 45(4): 634--652.

\bibitem[{Friedrich and Neumann(2015)}]{DBLP:journals/ec/FriedrichN15}
Friedrich, T.; and Neumann, F. 2015.
\newblock Maximizing Submodular Functions under Matroid Constraints by
  Evolutionary Algorithms.
\newblock \emph{Evolutionary Computation} 23(4): 543--558.

\bibitem[{Gao, Nallaperuma, and Neumann(2016)}]{DBLP:conf/ppsn/GaoNN16}
Gao, W.; Nallaperuma, S.; and Neumann, F. 2016.
\newblock Feature-Based Diversity Optimization for Problem Instance
  Classification.
\newblock In \emph{Proceedings of the Parallel Problem Solving from Nature,
  {PPSN} 2016}, volume 9921 of \emph{Lecture Notes in Computer Science},
  869--879. Springer.

\bibitem[{Katz and Sohrabi(2020)}]{DBLP:conf/aaai/0001S20}
Katz, M.; and Sohrabi, S. 2020.
\newblock Reshaping Diverse Planning.
\newblock In \emph{Proceedings of the The Thirty-Fourth {AAAI} Conference on
  Artificial Intelligence, {AAAI 2020}}, 9892--9899. {AAAI} Press.

\bibitem[{Katz, Sohrabi, and Udrea(2020)}]{DBLP:conf/aaai/KatzSU20}
Katz, M.; Sohrabi, S.; and Udrea, O. 2020.
\newblock Top-Quality Planning: Finding Practically Useful Sets of Best Plans.
\newblock In \emph{Proceedings of the The Thirty-Fourth {AAAI} Conference on
  Artificial Intelligence, {AAAI 2020}}, 9900--9907. {AAAI} Press.

\bibitem[{Katz et~al.(2018)Katz, Sohrabi, Udrea, and
  Winterer}]{DBLP:conf/aips/0001SUW18}
Katz, M.; Sohrabi, S.; Udrea, O.; and Winterer, D. 2018.
\newblock A Novel Iterative Approach to Top-k Planning.
\newblock In \emph{Proceedings of the Twenty-Eighth International Conference on
  Automated Planning and Scheduling, {ICAPS} 2018}, 132--140. {AAAI} Press.

\bibitem[{Kempe, Kleinberg, and Tardos(2003)}]{DBLP:conf/kdd/KempeKT03}
Kempe, D.; Kleinberg, J.~M.; and Tardos, {\'{E}}. 2003.
\newblock Maximizing the spread of influence through a social network.
\newblock In \emph{Proceedings of the Ninth {ACM} {SIGKDD} International
  Conference on Knowledge Discovery and Data Mining, 2003}, 137--146.

\bibitem[{Khuller, Moss, and Naor(1999)}]{DBLP:journals/ipl/KhullerMN99}
Khuller, S.; Moss, A.; and Naor, J. 1999.
\newblock The Budgeted Maximum Coverage Problem.
\newblock \emph{Information Processing Letters} 70(1): 39--45.

\bibitem[{Krause and Golovin(2014)}]{DBLP:books/cu/p/0001G14}
Krause, A.; and Golovin, D. 2014.
\newblock Submodular Function Maximization.
\newblock In \emph{Tractability: Practical approaches to hard problems},
  71--104.

\bibitem[{Leskovec et~al.(2007)Leskovec, Krause, Guestrin, Faloutsos,
  VanBriesen, and Glance}]{DBLP:conf/kdd/LeskovecKGFVG07}
Leskovec, J.; Krause, A.; Guestrin, C.; Faloutsos, C.; VanBriesen, J.~M.; and
  Glance, N.~S. 2007.
\newblock Cost-effective outbreak detection in networks.
\newblock In \emph{Proceedings of the 13th {ACM} {SIGKDD} International
  Conference on Knowledge Discovery and Data Mining, 2007}, 420--429. {ACM}.

\bibitem[{Masood and Doshi{-}Velez(2019)}]{DBLP:conf/ijcai/MasoodD19}
Masood, M.~A.; and Doshi{-}Velez, F. 2019.
\newblock Diversity-Inducing Policy Gradient: Using Maximum Mean Discrepancy to
  Find a Set of Diverse Policies.
\newblock In \emph{Proceedings of the International Joint Conference on
  Artificial Intelligence, {IJCAI 2019}}, 5923--5929.

\bibitem[{Nemhauser, Wolsey, and Fisher(1978)}]{DBLP:journals/mp/NemhauserWF78}
Nemhauser, G.~L.; Wolsey, L.~A.; and Fisher, M.~L. 1978.
\newblock An analysis of approximations for maximizing submodular set functions
  - {I}.
\newblock \emph{Math. Program.} 14(1): 265--294.

\bibitem[{Neumann et~al.(2018)Neumann, Gao, Doerr, Neumann, and
  Wagner}]{DBLP:conf/gecco/NeumannGDN018}
Neumann, A.; Gao, W.; Doerr, C.; Neumann, F.; and Wagner, M. 2018.
\newblock Discrepancy-based evolutionary diversity optimization.
\newblock In \emph{Proceedings of the Genetic and Evolutionary Computation
  Conference, {GECCO} 2018}, 991--998. {ACM}.

\bibitem[{Neumann et~al.(2019)Neumann, Gao, Wagner, and
  Neumann}]{DBLP:conf/gecco/NeumannG0019}
Neumann, A.; Gao, W.; Wagner, M.; and Neumann, F. 2019.
\newblock Evolutionary diversity optimization using multi-objective indicators.
\newblock In \emph{Proceedings of the Genetic and Evolutionary Computation
  Conference, {GECCO} 2019}, 837--845. {ACM}.

\bibitem[{Neumann and Neumann(2020)}]{DBLP:conf/ppsn/NeumannN20}
Neumann, A.; and Neumann, F. 2020.
\newblock Optimising Monotone Chance-Constrained Submodular Functions Using
  Evolutionary Multi-objective Algorithms.
\newblock In \emph{Proceedings of the Parallel Problem Solving from Nature,
  {PPSN} 2020}, volume 12269 of \emph{Lecture Notes in Computer Science},
  404--417. Springer.

\bibitem[{Nguyen and Bui(2018)}]{datasetsfrb}
Nguyen, T.~H.; and Bui, T. 2018.
\newblock Benchmark instances.
\newblock Available at: https://turing.cs.hbg.psu.edu/txn131/.

\bibitem[{Pugh, Soros, and
  Stanley(2016{\natexlab{a}})}]{DBLP:journals/firai/PughSS16}
Pugh, J.~K.; Soros, L.~B.; and Stanley, K.~O. 2016{\natexlab{a}}.
\newblock Quality Diversity: {A} New Frontier for Evolutionary Computation.
\newblock \emph{Frontiers Robotics {AI}} 3: 40.

\bibitem[{Pugh, Soros, and
  Stanley(2016{\natexlab{b}})}]{DBLP:conf/ppsn/PughSS16}
Pugh, J.~K.; Soros, L.~B.; and Stanley, K.~O. 2016{\natexlab{b}}.
\newblock Searching for Quality Diversity When Diversity is Unaligned with
  Quality.
\newblock In \emph{Proceedings of the Parallel Problem Solving from Nature,
  {PPSN} 2016}, volume 9921 of \emph{Lecture Notes in Computer Science},
  880--889. Springer.

\bibitem[{Qian et~al.(2017)Qian, Shi, Yu, and Tang}]{DBLP:conf/ijcai/QianSYT17}
Qian, C.; Shi, J.; Yu, Y.; and Tang, K. 2017.
\newblock On Subset Selection with General Cost Constraints.
\newblock In \emph{Proceedings of the International Joint Conference on
  Artificial Intelligence, {IJCAI} 2017}, 2613--2619.

\bibitem[{Qian, Yu, and Zhou(2015)}]{DBLP:conf/nips/QianYZ15}
Qian, C.; Yu, Y.; and Zhou, Z. 2015.
\newblock Subset Selection by {P}areto Optimization.
\newblock In \emph{Proceedings of the 28th International Conference on Neural
  Information Processing Systems - Volume 1, NIPS 2015}, 1774--1782.

\bibitem[{Ulrich and Thiele(2011)}]{DBLP:conf/gecco/UlrichT11}
Ulrich, T.; and Thiele, L. 2011.
\newblock Maximizing population diversity in single-objective optimization.
\newblock In \emph{Proceedings of the Genetic and Evolutionary Computation
  Conference, {GECCO} 2011}, 641--648. {ACM}.

\bibitem[{Zhang and Vorobeychik(2016)}]{DBLP:conf/aaai/ZhangV16}
Zhang, H.; and Vorobeychik, Y. 2016.
\newblock Submodular Optimization with Routing Constraints.
\newblock In \emph{Proceedings of the {AAAI} Conference on Artificial
  Intelligence, {AAAI} 2016}, 819--826. {AAAI} Press.

\bibitem[{Zhou, Yu, and Qian(2019)}]{DBLP:books/sp/ZhouYQ19}
Zhou, Z.; Yu, Y.; and Qian, C. 2019.
\newblock \emph{Evolutionary learning: Advances in theories and algorithms}.
\newblock Springer.
\newblock ISBN 978-981-13-5955-2.

\end{thebibliography}

\end{document}